\newtheorem{theorem}{Theorem}
\newtheorem{proposition}{Proposition}
\newtheorem{lemma}[proposition]{Lemma}
\newtheorem{definition}[proposition]{Definition}
\newcommand{\CC}{\mathcal{C}}
\newcommand{\XX}{\mathcal{X}}
\newcommand{\DD}{\mathcal{D}}
\newcommand{\EE}{\mathbb{E}}
\newcommand{\Yhat}{\hat{Y}}
\newcommand{\round}[1]{\left\lfloor #1 \right\rceil}
\newcommand{\vocab}{\textbf}
\renewcommand{\AA}{\mathcal{A}}
\DeclareMathOperator{\acc}{acc}
\newcommand{\Comments}{1}
\newcommand{\mynote}[3]{\ifnum\Comments=1\textcolor{#1}{#2: #3}\fi}
\title{A No Free Lunch Theorem for Human-AI Collaboration}
\author{Kenny Peng, Nikhil Garg, and Jon Kleinberg\footnote{Kenny Peng, Cornell Tech (\url{klp98@cornell.edu}); Nikhil Garg, Cornell Tech; Jon Kleinberg, Cornell University. We thank Rohan Alur, Kate Donahue, Sophie Greenwood, and Rajiv Movva for valuable discussion and feedback. Nikhil Garg is supported by NSF CAREER IIS-2339427 and Cornell Tech Urban Tech Hub and Amazon research awards.}}
\date{}
\begin{document}

\maketitle

\begin{abstract}
The gold standard in human-AI collaboration is \textit{complementarity}---when combined performance exceeds both the human and algorithm alone. We investigate this challenge in binary classification settings where the goal is to maximize 0-1 accuracy. Given two or more agents who can make calibrated probabilistic predictions, we show a ``No Free Lunch''-style result. Any deterministic collaboration strategy (a function mapping calibrated probabilities into binary classifications) that does not essentially always defer to the same agent will sometimes perform worse than the \textit{least accurate} agent. In other words, complementarity cannot be achieved ``for free.'' The result does suggest one model of collaboration with guarantees, where one agent identifies ``obvious'' errors of the other agent. We also use the result to understand the necessary conditions enabling the success of other collaboration techniques, providing guidance to human-AI collaboration.
\end{abstract}

\section{Introduction}

Many important decisions depend---in large part---on prediction. Doctors decide if a patient should undergo a procedure by predicting if the operation will succeed. Judges decide whether or not to grant bail by predicting if the defendant will reoffend. Loan officials decide whether or not to offer a loan by predicting if the loan will be repaid. In all of these tasks, algorithmic predictions are now commonly incorporated into the decision-making process. Still, humans remain a central part of each of these settings, and often have the final say. The hope of human-AI collaboration is that a human and an algorithm can leverage their unique strengths to make predictions that are more accurate than either alone. This standard has been called \textit{complementarity}.

The present work investigates the conditions under which complementarity can be guaranteed when making binary classifications, where the goal is to maximize 0-1 accuracy (minimize the number of misclassifications). We consider a setup in which two or more agents can make calibrated probabilistic predictions on a shared task. These predictions may differ---for example, due to differences in the information available to each agent. Each agent can use their probabilistic predictions to make binary classifications, each achieving some level of accuracy. We ask: Is there a way to combine each agent's calibrated predictions to produce binary classifications that are guaranteed to be at least as accurate, and sometimes \textit{more accurate}, than every individual agent? In other words, if agents are calibrated, can we achieve complementarity ``for free''?

Our main result answers this question, mostly in the negative. In fact, \Cref{thm:main} shows that it is difficult to ensure a much lower bar: producing binary classifications that are always at least as accurate as \textit{the worst} individual agent. As we will show---except under very narrow circumstances---the only way to guarantee that a collaboration does not perform worse than the worst agent is to always defer to a single agent (in which case the standard is trivially met). In other words, substantive collaboration in our setup must at times come at a significant cost. There is no ``free lunch.''

Let us be more concrete, if still somewhat informal. Consider $n$ agents, who, given an input $x$, produce calibrated probabilistic predictions $P_1(x), P_2(x), \cdots, P_n(x)\in [0,1].$ By calibrated, we mean that among inputs for which an agent predicts a positive label with probability $p$, the true proportion of positive labels is in fact $p$. For agent $k$, given their calibrated prediction $P_k(x),$ the optimal 0-1 classification to maximize accuracy is obtained by a threshold rule: predict $1$ if $P_k(x) > 0.5$ and $0$ if $P_k(x) < 0.5.$ Using $\round{\cdot}$ to denote the rounding operator, the optimal classification is $\round{P_k(x)}.$ In this way, each individual agent can achieve some level of accuracy on their own. A collaboration strategy is a way to combine the calibrated predictions $P_1(x),P_2(x),\cdots,P_n(x)$ into a 0-1 classification. Therefore, a collaboration strategy is defined as a function $\CC:[0,1]^n\rightarrow \{0,1\}.$ There are a number of intuitive collaboration strategies: one approach is to round the average of predicted probabilities; another is to take a majority vote of each agent's classifications; yet another is to defer to the classification of the most confident agent (i.e., the agent whose probabilistic prediction is furthest from $\frac{1}{2}$). Any given collaboration strategy also achieves an accuracy, which can be compared to the accuracy of individual agents.

We call a collaboration strategy $\CC$ \textbf{reliable} if it is always at least as accurate as the \textit{least accurate} agent. We call $\CC$ \textbf{non-collaborative} if there exists $k\in [n]$ such that for all $(p_1,p_2,\cdots,p_n)\in (0,1)^n$, $\CC(p_1,p_2,\cdots,p_n) = \round{p_k}.$ In other words, $\CC$ is non-collaborative if and only if it always defers to the same agent, except in the special case when another agent is certain in their prediction (i.e., predicts exactly $0$ or $1$). We may then state our main result.

\begin{theorem}
Every reliable collaboration strategy 
is non-collaborative.
\end{theorem}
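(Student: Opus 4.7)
The plan is to prove the contrapositive: if $\CC$ is not non-collaborative, I construct a calibrated distribution on which $\CC$'s accuracy drops strictly below every agent's accuracy. The argument splits naturally into two steps. Step~A (pointwise) shows that a reliable $\CC$ must satisfy $\CC(\vec{p}) = \round{p_k}$ for at least one $k$ at each interior $\vec{p}$. Step~B (consistency) upgrades this to: the same $k$ works for every $\vec{p} \in (0,1)^n$.

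For Step~A I argue by contradiction. Assume $\CC(\vec{p}) = 1$ but $p_k < 1/2$ for every $k$ (the symmetric case is analogous). I take the distribution supported on $\vec{p}$ with weight $\alpha$ together with $n$ boundary auxiliaries $\vec{a}^{(k)}$, where $\vec{a}^{(k)}$ shares coordinate $k$ with $\vec{p}$ and has every other coordinate equal to $1$. Since some coordinate of $\vec{a}^{(k)}$ is $1$, calibration of the corresponding agent forces $Y = 1$ at $\vec{a}^{(k)}$. Choosing the weight of $\vec{a}^{(k)}$ to be $\alpha_k = p_k\alpha/(1 - p_k)$ makes agent $k$'s calibration at value $p_k$ force $r(\vec{p}) = 0$, so $Y = 0$ at $\vec{p}$. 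A short calculation then shows that each agent has accuracy $1 - \alpha_k$ while $\CC$ has accuracy at most $1 - \alpha$. The hypothesis $p_k < 1/2$ is precisely what makes $\alpha_k < \alpha$, so $\CC$ is strictly worse than every agent, contradicting reliability.

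For Step~B, assume $\CC$ is reliable and, for each $k$, pick an interior point $\vec{p}^{(k)}$ with $\CC(\vec{p}^{(k)}) \neq \round{p^{(k)}_k}$, which exists because $\CC$ is not non-collaborative. I combine all $\vec{p}^{(k)}$'s into a single distribution, attaching to each $\vec{p}^{(k)}$ a family of boundary auxiliaries tuned so that $r(\vec{p}^{(k)}) = 1 - \CC(\vec{p}^{(k)})$---auxiliaries lying on the all-ones face when $\CC(\vec{p}^{(k)}) = 1$, and on the all-zeros face otherwise. By design $\CC$ is wrong at every $\vec{p}^{(k)}$, while agent $k$ is correct at $\vec{p}^{(k)}$. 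Tracking accuracies through the calibration identities yields, for each $k$, a strict inequality between agent $k$'s accuracy and $\CC$'s, so $\CC$ falls below the minimum agent accuracy, contradicting reliability.

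The main obstacle is the calibration bookkeeping in Step~B when the chosen $\vec{p}^{(k)}$'s share coordinate values across different $k$: the auxiliary-point calibration constraints from different families then couple, and I either perturb the $\vec{p}^{(k)}$'s slightly to achieve generic distinctness or verify that the coupled constraints still deliver the required strict inequalities. A secondary subtlety is that $\CC$ may behave arbitrarily on the boundary auxiliary points, but it suffices to reach the contradiction against $\CC$'s most favorable boundary behavior, since that choice already maximizes $\CC$'s accuracy on the constructed distribution and thus any other behavior only strengthens the conclusion.
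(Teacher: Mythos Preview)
Your overall plan mirrors the paper's: for each agent $k$ find a witness $\vec p^{(k)}$ where $\CC$ disagrees with agent $k$, build a setting in which this forces $\CC$ to underperform agent $k$ while not outperforming anyone else, and then glue the $n$ settings together. Your boundary-auxiliary construction is essentially the paper's partition construction transcribed onto the hypercube.

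There is, however, a genuine gap concerning the tie-breaking clause in the definition of non-collaborative. Your Step~B states the target conclusion as ``the same $k$ works for every $\vec p\in(0,1)^n$,'' meaning $\CC(\vec p)=\round{p_k}$ on all of $(0,1)^n$. That is strictly stronger than ``non-collaborative'' and is false for some reliable strategies: $\CC(\vec p)=\mathbf{1}[p_1>\tfrac12]$ is reliable (it has exactly agent~$1$'s accuracy in every setting) and non-collaborative with $\alpha=0$, yet $\CC(\tfrac12,\ldots)=0\neq 1=\round{\tfrac12}$. Tracing your argument, the failure occurs when the only available witness for agent $k$ has $p^{(k)}_k=\tfrac12$---this is exactly the situation where $\CC$ already satisfies clause~(i) of non-collaborative for $k$ but is not constant on $\{p_k=\tfrac12\}$. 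Your block-$k$ calibration identity then gives auxiliary weight $\alpha^{(k)}_k=\alpha^{(k)}$, so agent $k$'s loss in that block equals $\CC$'s best-case loss, with no strict gap; and since in this situation $\CC$ agrees with agent $k$ whenever $p_k\neq\tfrac12$, no other block supplies a strict gap for agent $k$ either. The paper treats this case by a separate construction that uses \emph{two} points on $\{p_k=\tfrac12\}$ where $\CC$ takes opposite values, mixed with an auxiliary setting in which $\CC$ coincides with agent $k$ but is strictly worse than every other agent.

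A smaller issue: your perturbation device for decoupling calibration constraints is unsafe, because $\CC$ is an arbitrary map $[0,1]^n\to\{0,1\}$ with no continuity, so perturbing $\vec p^{(k)}$ may destroy the disagreement $\CC(\vec p^{(k)})\neq\round{p^{(k)}_k}$. The paper sidesteps this entirely by working over an abstract input space and taking a disjoint union of the per-$k$ settings; calibration then holds block-by-block and accuracies add linearly, with no coupling to manage.
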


Consequently, essentially any collaboration strategy that can sometimes achieve complementarity must at other times perform worse than \textit{all} agents. For example, it shows that none of the collaboration strategies described above---averaging probabilities, majority vote, deferring to the most confident agent---are reliable; each of these collaboration strategies sometimes perform worse than the least accurate agent. \Cref{thm:main} is not, however, a statement about these particular collaboration strategies---rather, it applies to the entire space of collaboration strategies $\CC:[0,1]^n\rightarrow \{0,1\}.$ In this way, \Cref{thm:main} is a ``No Free Lunch''-style result \citep{wolpert1997no}. Wolpert and Macready showed that in optimization problems, ``if an algorithm performs well on a certain class of problems then it necessarily pays for that with degraded performance on the set of all remaining problems.'' Like Wolpert and Macready's classical result, our result implies that further structure in the collaboration setting must be leveraged or assumed to obtain guarantees. (There is also a sense in which our result is reminiscent of Arrow's Impossibility Theorem \citep{arrow1950difficulty}, which shows that the only way to aggregate votes while satisfying certain axioms is by deferring to a single dictator. However, our results fundamentally differ, in that Arrow focuses on aggregating ranked preference lists, whereas we focus on aggregating continuous predictions.)

Note, however, that \Cref{thm:main} leaves a small window for collaboration: if a (calibrated) agent is certain, it is clearly possible (and optimal) to defer to that agent. This points to a possible model of successful human-AI collaboration, in which one agent is only in charge of identifying ``obvious'' errors of the other agent. In contrast, it is not sufficient, however, for an agent to be very confident in their prediction (i.e., predicting close to probability $0$ or $1$). The agent must be certain. At a high level, the reason is because an agent's predicted probability is not (generally) calibrated after conditioning on the other agents' predictions. The exception is when an agent is fully certain, in which case no outside information can alter this certainty.

Many results in computer science and economics---experts, ensemble prediction methods, Condorcet's Jury Theorem, and recent human-AI collaboration techniques---demonstrate that collaboration in prediction is often possible when further structure is imposed. \Cref{thm:main} can be used to shed light on what conditions enable these approaches to succeed. In \Cref{sec:discussion}, we survey this literature, identifying two conditions that distinguish these approaches from our setting, both of which enables success: independence in predictions, or (learned) knowledge of the joint distribution of agent predictions and outcomes. Neither condition is satisfied in the setup of \Cref{thm:main}. Our result thus suggests that conditions like these are necessary to ensure successful collaboration. 

In particular, our setup is reminiscent of typical implementations of human-AI collaboration where humans are shown a probabilistic algorithmic prediction to incorporate into their decision (e.g., \citet{shin2021ai,cabrera2023improving}). In these situations, neither independence nor knowledge of the joint distribution is guaranteed. Existing empirical evidence suggests that these kinds of implementations do not typically result in complementarity, even in laboratory settings (e.g., \citet{green2019principles,lai2019human,kiani2020impact}). \cite{vaccaro2024combinations} provide a meta-analysis demonstrating that most studies of human-AI collaboration do not document complementarity. It should be noted that in real-world settings, human behavioral biases and cognitive limitations further complicate human-AI collaboration \citep{buccinca2021trust, bhatt2021uncertainty, bondi2022role}. \Cref{thm:main} suggests that even should humans not suffer from these limitations, there remain fundamental challenges in combining expertise in prediction problems.

To prove \Cref{thm:main} (which we do in \Cref{sec:proof}), it suffices to show that for any collaboration strategy $\CC:[0,1]^n\rightarrow\{0,1\}$ that is not non-collaborative, it is possible to construct a setting in which $\mathcal{C}$ is less accurate than all individual agents. The high-level approach is to use the violation of the non-collaborative condition to construct a set of inputs on which we know the collaboration strategy's behavior: specifically, for every $k\in [n]$, there must exist $(p_1, p_2, \cdots, p_n)$ that $\CC$ classifies differently than $k$. We use this to construct a setting in which $\CC$ performs at least as bad as every agent and strictly worse than $k$. After doing this for each $k$, we can then ``glue together'' the resulting settings to obtain the desired adversarial example. Intuitively, our result and proof leverage the insight that what matters for collaboration is the \textit{joint} distribution of agent predictions: conditional on the predictions of other agents, when is an agent accurate? Despite calibration implying that each agent has a strong sense of when they are more or less confident, calibration alone does not reveal sufficient information about interactions between agent predictions---more is needed to know when to defer to one agent or another. 
\section{A No Free Lunch Theorem}\label{sec:main-result}
A binary classification problem can be represented as a distribution $\DD$ over $\XX\times \{0,1\}$ where $\XX$ is the input space. A \vocab{classifier} is a function $\Yhat: \XX\rightarrow \{0,1\}$. The 0-1 \vocab{accuracy} of a classifier is given by
\begin{equation}
    \EE_{(X,Y)\sim \DD} |\Yhat(X) - Y|.
\end{equation}
A \vocab{predictor} is a function $P: \XX\rightarrow [0,1]$. A predictor $P$ is \vocab{calibrated} on $\DD$ if
\begin{equation}
    \Pr_{(X,Y)\sim \DD}[Y=1\,|\,P(X)=p] = p
\end{equation}
for all $p\in [0,1]$ (more precisely, for all $p\in \text{Image}(P)$). 

We may now define collaboration settings.
\begin{definition}A \textbf{collaboration setting} is an ordered tuple 
\begin{equation}
    S=(\DD, P_1, \cdots, P_n),
\end{equation}
where $\DD$ is a probability distribution over $\XX\times \{0,1\}$ and $P_1,\cdots,P_n$ are each calibrated predictors on $\DD$. 
\end{definition}

The predictor $P_i$ induces the classifier $\Yhat_i: x\mapsto \round{P_i(x)},$ where $\round{\cdot}$ is the rounding operator (without loss of generality, set $\round{0.5}=1$). $\Yhat_i$ achieves 0-1 accuracy
\begin{equation}
    \acc_i(S) := \EE_{(X,Y)\sim \DD}[\max\{P_i(X), 1 - P_i(X)\}].
\end{equation}

A collaboration strategy is a way to combine the predicted probabilities of each agent to make a classification.
\begin{definition}
A \textbf{collaboration strategy} is a deterministic function $\CC: [0,1]^n \rightarrow \{0,1\}.$
\end{definition}
Here, $\CC$ should be interpreted as a function that takes in $n$ predicted probabilities and returns a 0-1 classification. Given a collaboration setting $S=(\DD, P_1, \cdots, P_n)$, a collaboration strategy $\CC$ induces the classifier
\begin{equation}
    \Yhat_\CC: \XX \rightarrow \{0,1\},\quad x \mapsto \CC(P_1(x),\cdots,P_n(x)).
\end{equation}
Let $\acc_\CC(S)$ denote the 0-1 accuracy of $\Yhat_\CC$ on $\DD$. In other words, for each $x\in \XX$, each agent $i$ produces a predicted probability $P_i(x).$ The collaboration strategy aggregates these predictions into a binary classification. 

\begin{definition}
    A collaboration strategy $\CC:[0,1]^n\rightarrow \{0,1\}$ is \textbf{reliable} if $\acc_\CC(S) \ge \min_{i\in [n]} \acc_i(S)$ for all collaboration settings $S$.
\end{definition}
In words, a collaboration strategy is reliable if and only if it always performs at least as well as the \textit{least accurate} agent.

\begin{definition}\label{def:non-collaborative}
    A collaboration strategy $\CC:[0,1]^n\rightarrow \{0,1\}$ is \textbf{non-collaborative} if there exists $k\in [n]$ and $\alpha\in \{0,1\}$ such that 
    \begin{equation}
    \CC(p_1,p_2,\cdots,p_n) = 
        \begin{cases}
        \round{p_k} &\quad \text{$p_k\neq \frac{1}{2}$} \\
            \alpha &\quad \text{$p_k=\frac{1}{2}$}
        \end{cases}
    \end{equation}
    for all $(p_1,p_2,\cdots,p_n)\in (0,1)^n$.
\end{definition}

A collaboration strategy is non-collaborative if it essentially always defers to the classification of a single agent $k\in [n]$. The definition admits two exceptions. First, when $p_k = \frac{1}{2}$, the collaboration may select either $0$ or $1$, but must always select the same such value; this choice has no effect on the accuracy of the classifier, so this exception can be essentially ignored. Second, the collaboration strategy need not select $\round{p_k}$ when $(p_1,p_2,\cdots,p_n)\notin (0,1)^n$---i.e., when $p_i\in \{0,1\}$ for some $i\in [n]$; this exception is somewhat more interesting, and we will return to it after stating our main result.

\setcounter{theorem}{0}
\begin{theorem}\label{thm:main}
    Every reliable collaboration strategy is non-collaborative.

\end{theorem}

\Cref{thm:main} implies that the search for ``free'' complementarity in human-AI collaboration is futile. Any collaboration strategy that is reliable (performs no worse than the \textit{worst} agent) is non-collaborative (essentially always defers to the same one agent).

While \Cref{thm:main} implies that \textit{any} reliable collaboration strategy generally must defer to the same agent, it allows for one exception. In particular, taking note of \Cref{def:non-collaborative}, if some other agent is entirely confident in their prediction ($p_i\in \{0,1\}$ for some $i$), it is not necessary to always defer to the same agent $k$. Indeed, it is always ``reliable'' (and, in fact, optimal) to defer to the prediction of an agent who is entirely confident, since agents are calibrated---guaranteeing that on the set of points for which the agent predicts probability 1, all points are truly positive. This suggests a simple collaboration strategy: always defer to a fixed agent, except when another agent is entirely confident. For example, either the human or algorithm can be assigned ``primary'' decision-making power, and the other party can be tasked only with overriding obvious mistakes.

This exception illustrates an underlying intuition behind \Cref{thm:main}: conditional on other agents' predictions, a given agent's prediction is no longer calibrated. Only in the specific case when an agent is \textit{certain} in their prediction, can they be confident. Otherwise, even if an agent predicts probability $0.95$, for example, deferring to that agent is not guaranteed to be a good idea in all situations---given the predictions of other agents, and conditional on the choice of deferring to the agent, the prediction of $0.95$ may be far from calibrated.

\section{Implications for Human-AI Collaboration}\label{sec:discussion} 

In this section, we use \Cref{thm:main} to better understand the conditions that enable effective human-AI collaboration. We begin by discussing numerous settings in ML, human-AI collaboration, and beyond, in which collaboration has been shown to be possible (often, with guarantees). We then identify two common features of these “success stories”---features which are not present in the setup of \Cref{thm:main}. We then argue that common implementations of human-AI collaboration also lack these features, and suggest a path forward. Like Wolpert and Macready's ``No Free Lunch Theorem,'' a primary use of \Cref{thm:main} is in clarifying the additional structure needed to ensure successful prediction.

\subsection{Successful Collaborations}
There are many lines of work that are fundamentally about collaboration in classification tasks, each of which---unlike us---obtain positive results. For example, the machine learning literature is ripe with such results. Combining expert predictions is a basic problem in online learning theory (see \citet{blum2005line} for an overview). There, it has been shown, for example, that expert predictions can be combined to perform better than the best linear combination of experts \citep{littlestone1991line}. The idea of mixing expert predictions is also seen in the literature on ensemble classifiers, including boosting methods and random forests. Pivoting, Condorcet's jury theorem \citep{condorcet1785essai} provides a collaboration-based view of voting theory: when individual jurors are biased towards the correct decision, the majority vote is more accurate than individual votes. Finally, recent work on human-AI collaboration has presented numerous approaches to achieving complementarity \citep{madras2018predict, donahue2022human, alur2024distinguishing}. Why do each these methods work, and why do they have guarantees? We suggest two distinct features behind these successful collaborations.

\paragraph{Leveraging Independence: Condorcet's Jury Theorem, Wisdom of Crowds, Random Forests.}

Condorcet's jury theorem \citep{condorcet1785essai} states that when individual jurors are biased towards the correct decision (i.e., vote in that direction independently with probability $p>\frac{1}{2}$), the majority vote is more accurate than any individual vote. Indeed, the idea of aggregating independent signals appears in a much broader literature studying information aggregation and the ``wisdom of crowds.'' A key distinction between \Cref{thm:main} and these settings is a lack of ``independence'' in our setting; agents need not make predictions ``independently'' in our setup. This also appears to be the key distinction between our setting and that of majority vote ensemble classifiers in ML such as random forests, which rely on some amount of independence in how decision trees are constructed \citep{breiman2001random}. Independence cannot be guaranteed in human-AI collaboration in this way.

\paragraph{Leveraging Learning: Experts, Boosting.}

\Cref{thm:main} contrasts with the longstanding machine literature in machine learning that shows how multiple ``expert'' predictions can be effectively combined \citep{blum2005line}. In fact, the idea of combining expert advice has formed a fruitful baseline intuition for how to build effective ML algorithms more broadly, such as in boosting \citep{schapire1999brief}. A crucial aspect of these methods is the process of \textit{learning} which experts to trust, and when and how much to trust them. (Note that while both random forests and boosting are considered ensemble methods---methods that combine predictions---the former relies on independence of predictions and the latter on learning joint behavior.) This learning process is absent in the setup of \Cref{thm:main}. While agents have strong information about their own predictions (calibration), they do not know direct information about joint behavior.

\subsection{Human-AI Collaboration}
We have described two general approaches to ensuring effective collaboration: independence and learning. Since we cannot generally ensure that a human and algorithm produce independent estimates, we focus on the potential of the latter approach.

Learning enables collaboration by understanding the joint behavior of agents. \Cref{thm:main} itself illustrates this point in one setting: when an agent is entirely certain in their prediction. In such a setting, the relevant ``joint'' information is fully understood: regardless of what the other agents predict, it is safe to defer to the agent. This connects more generally to recent work showing that complementarity is achievable exactly when there are subsets of the domain in which each agent has an advantage \citep{donahue2022human}. The approach of ``overriding only when an agent is certain'' can be viewed in this framing, in which there is a designated region in which one agent has a clear advantage (due to their full certainty), but otherwise, the other agent is deferred to. However, as \Cref{thm:main} implies, the regions in which each agent has an advantage cannot in general be determined \textit{a priori} from only their predictions in each region (even if calibrated probabilities intuitively give a measure of effectiveness). Thus, implementations of such collaboration models must reason more directly with whether or not one agent is more equipped to handle a given subset of the feature space. One way in which to do this is by performing additional training using joint information. Indeed, idea has been taken up by ``learning to defer'' approaches (e.g., \citet{madras2018predict, mozannar2022teaching}). Similarly, \citet{alur2024distinguishing} introduce a method to identify subsets of inputs that are indistinguishable to an agent, in which case signal from the prediction of the other agent can then be leveraged to improve predictions overall. These approaches require data from the joint distribution of agent predictions and outcomes. \Cref{thm:main} suggests that such data is necessary to obtain guarantees.
\section{Proof of \Cref{thm:main}}\label{sec:proof}

Before proceeding to the proof of \Cref{thm:main}, we begin by establishing some basic language and tools with which to analyze and construct collaboration settings.

\subsection{Preliminaries}

\paragraph{Correctness and Agreement.}
We first introduce basic language to describe the performance of agents and collaboration strategies.
\begin{definition}\label{def:correct}
For a collaboration setting $(\DD, P_1,\cdots,P_n)$ and $x\in \XX,$ we say that
\begin{itemize}
    \item agent $i$ is \textbf{correct} on $x$ if $$\Yhat_i(x) = \round{\Pr_{(X,Y)\sim \DD}[Y=1\,|\,X=x]}$$
    \item agent $i$ is \textbf{incorrect} on $x$ if $$\Yhat_i(x) \neq \round{\Pr_{(X,Y)\sim \DD}[Y=1\,|\,X=x]}$$
    \item agents $i$ and $j$ \textbf{agree} on $x$ if $$\Yhat_i(x) = \Yhat_j(x)$$
    \item agents $i$ and $j$ \textbf{disagree} on $x$ if $$\Yhat_i(x) \neq \Yhat_j(x).$$
\end{itemize}
For each of these statements, we can replace $i$ or $j$ with a collaboration strategy $\CC$.
\end{definition}

For example, if $\Pr_{(X,Y)\sim \DD}[Y=1\,|\,X=x] = 0.75,$ then $i$ is correct on $x$ if and only if $\Yhat_i(x) = 1.$ (This is the correct classification to maximize 0-1 accuracy.) Using the language established in \Cref{def:correct}, we can make some simple observations about accuracies. For example, if $i$ is correct on $x$ whenever $j$ is correct on $x$, this implies that $\acc_i(S)\ge \acc_j(S).$ If there furthermore is some $x$ for which $i$ is correct but $j$ is incorrect, and where $\Pr_{(X,Y)\sim \DD}[X=x]\neq 0$, this implies that $\acc_i(S)>\acc_j(S).$

\paragraph{Combining collaboration settings.}
Having established some basic language with which to describe and analyze the performance of experts and collaboration strategies on a collaboration setting, we now establish a basic tool for constructing collaboration settings ``piece by piece.''

\begin{proposition}\label{prop:linear-combination} \textbf{Linear combinations of settings.}
    Consider $\ell$ collaboration settings $S_1, \cdots, S_\ell$. Then for all $(\lambda_1, \cdots, \lambda_\ell)\in \Delta^\ell$, there exists a collaboration setting $S$ such that
    \begin{align}
        \acc_i(S) &= \sum_{m=1}^\ell \lambda_m \acc_i(S_m)\label{eq:monstera-1}\\
        \acc_\CC(S) &= \sum_{m=1}^\ell \lambda_m \acc_\CC(S_m)\label{eq:monstera-2}
    \end{align}
    for all $i\in [n]$ and collaboration strategies $\CC$.
\end{proposition}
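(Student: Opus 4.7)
The plan is to realize $S$ as a disjoint mixture of the given settings $S_m = (\DD_m, P_1^{(m)}, \ldots, P_n^{(m)})$. Concretely, I would take the disjoint union of input spaces, $\XX := \bigsqcup_{m=1}^\ell \XX_m$ (formally, tag each point by its origin index so there is no accidental overlap), and define $\DD$ as the mixture: draw $m \in [\ell]$ with probability $\lambda_m$, then draw $(X,Y)\sim \DD_m$, and return the tagged pair. Extend each predictor by $P_i(x) := P_i^{(m)}(x)$ when $x \in \XX_m$. Once this construction is in place, the two accuracy identities \eqref{eq:monstera-1} and \eqref{eq:monstera-2} follow immediately from linearity of expectation: since $\acc_i(S) = \EE_{\DD}[\max\{P_i(X), 1 - P_i(X)\}]$ and the same formal expression applies for $\acc_\CC$ via $\Yhat_\CC(X) = \CC(P_1(X), \ldots, P_n(X))$, conditioning on the component index $m$ gives $\sum_m \lambda_m$ times the per-component accuracy.

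The only substantive thing to check—and the step I expect to be the main (minor) obstacle—is that each $P_i$ remains \textbf{calibrated} on the mixture $\DD$, since \Cref{prop:linear-combination} must produce a valid collaboration setting. For a fixed $p \in [0,1]$, let $M(p) := \{m : p \in \text{Image}(P_i^{(m)})\}$ and let $q_m := \Pr_{\DD_m}[P_i^{(m)}(X) = p]$. Then
\begin{equation}
\Pr_{\DD}[Y=1 \mid P_i(X) = p] \;=\; \frac{\sum_{m \in M(p)} \lambda_m q_m \cdot \Pr_{\DD_m}[Y=1 \mid P_i^{(m)}(X) = p]}{\sum_{m \in M(p)} \lambda_m q_m} \;=\; \frac{\sum_{m \in M(p)} \lambda_m q_m \cdot p}{\sum_{m \in M(p)} \lambda_m q_m} \;=\; p,
\end{equation}
using calibration of each $P_i^{(m)}$ on $\DD_m$ in the middle equality. (If the denominator is zero, the conditioning event has probability zero and there is nothing to verify.) This shows each $P_i$ is calibrated on $\DD$, so $S = (\DD, P_1, \ldots, P_n)$ is a bona fide collaboration setting, completing the construction.

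With calibration in hand, the accuracy identities are one line each. For any $i$,
\begin{equation}
\acc_i(S) \;=\; \EE_{(X,Y)\sim \DD}[\max\{P_i(X), 1-P_i(X)\}] \;=\; \sum_{m=1}^\ell \lambda_m\, \EE_{(X,Y)\sim \DD_m}[\max\{P_i^{(m)}(X), 1-P_i^{(m)}(X)\}] \;=\; \sum_{m=1}^\ell \lambda_m \acc_i(S_m),
\end{equation}
and the analogous computation for $\acc_\CC(S)$—where $\Yhat_\CC(X) = \CC(P_1(X),\ldots,P_n(X))$ also decomposes additively across the components of the mixture—gives \eqref{eq:monstera-2}. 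Since calibration is the only non-trivial structural requirement and it is precisely preserved by convex mixtures of calibrated predictors, the proposition follows.
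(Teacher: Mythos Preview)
Your proposal is correct and takes essentially the same approach as the paper: form the tagged disjoint union of input spaces, define $\DD$ as the $\lambda$-mixture of the $\DD_m$, and extend each $P_i$ componentwise. In fact you are more thorough than the paper, which merely asserts ``$P_i$ is calibrated since $P_{i,m}$ is calibrated for all $m$'' and leaves the accuracy identities ``by inspection,'' whereas you spell out the calibration computation and the linearity-of-expectation step explicitly.
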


\begin{proof}
    Let $S_m = (\DD_m, P_{1,m}, \cdots, P_{n, m})$ for $m\in [\ell],$ where $\DD_m$ is a distribution over $\XX_m\times \{0,1\}.$ Then consider the collaboration setting $S = (\DD, P_1, \cdots, P_n)$, where $\DD$ is a distribution over $\XX\times \{0,1\}$ for $\XX := \bigcup_{m=1}^\ell \{(m, x):x\in \XX_m\}$. Specifically, we define $\DD$ such that 
    \begin{equation}
        \Pr_{(X,Y)\sim \DD}[X = (m,x)] = \lambda_m\Pr_{(X_m,Y_m)\sim \DD_m}[X_m=x]
    \end{equation}
    and
    \begin{align}
        \Pr_{(X,Y)\sim \DD}[Y=1 | X = (m,x)] =  \Pr_{(X_m,Y_m)\sim \DD_m}[Y_m=1 | X_m=x].
    \end{align}
    
    $\DD$ is essentially the distribution obtained by first randomly sampling $m\in [\ell]$ with probability $\lambda_\ell$ and then sampling from $\DD_m$. Now define $P_i$ such that $P_i(m,x) = P_{i,m}(x)$ for all $x\in \XX_m.$ $P_i$ is calibrated since $P_{i,m}$ is calibrated for all $m\in [\ell].$ By inspection, $S$ satisfies \eqref{eq:monstera-1} and \eqref{eq:monstera-2}.
\end{proof}

\paragraph{Building Calibrated Predictors from Partitions.} 
Finally, given a distribution $\DD$ over $\XX\times \{0,1\}$, we show how partitions of the input space $\XX$ induce calibrated predictors. Indeed, let $\AA_i$ be a partition of $\XX$. Then $\AA_i$ induces a calibrated predictor $P_i$, where for $x\in A \in \AA_i$,
\begin{equation}
    P_i(x) := \Pr_{(X,Y)\sim \DD}[Y=1\,|\,X\in A].
\end{equation}
Here $P_i$ is the Bayes-optimal predictor given a ``coarsening'' of the input space into the partitions $\XX.$ In this way, a collaboration setting may also be identified by an ordered tuple $(\DD, \AA_1, \cdots, \AA_n).$ This approach is central to the subsequent proofs.

\subsection{Main Proof}
In the remainder of this section, we prove \Cref{thm:main} in full. We first rewrite \Cref{thm:main} in an equivalent formulation.

\setcounter{theorem}{0}
\begin{theorem}
For a collaboration strategy $\CC$, $\acc_\CC(S) \ge \min_{i\in [n]}\acc_i(S)$ for all collaboration settings $S$ if and only if there exists $k\in [n]$ and $\alpha \in \{0,1\}$ such that for all $(p_1,p_2,\cdots,p_n)\in (0,1)^n$:

\begin{enumerate}
    \item[(i)] If $p_k\neq \frac{1}{2},$ then
    $\CC(p_1,p_2,\cdots,p_n)=\round{p_k}.$
    \item[(ii)] If $p_k=\frac{1}{2}$, then
        $\CC(p_1,p_2,\cdots,p_n) = \alpha.$
\end{enumerate}
\end{theorem}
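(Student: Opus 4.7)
The ``only if'' direction is the substance of the theorem; the ``if'' direction follows by inspection, since a non-collaborative $\CC$ satisfies $\Yhat_\CC = \Yhat_k$ whenever $P_k \ne \tfrac{1}{2}$ on the interior, and both classifiers have accuracy $\tfrac{1}{2}$ on the calibrated set $\{P_k = \tfrac{1}{2}\}$. For the hard direction I would argue the contrapositive: assuming $\CC$ is not non-collaborative, I construct a setting $S$ with $\acc_\CC(S) < \min_i \acc_i(S)$. The architecture is to build, for each $k \in [n]$, a ``per-agent'' adversarial setting $S_k$ satisfying $\acc_\CC(S_k) \le \acc_i(S_k)$ for every $i$ and $\acc_\CC(S_k) < \acc_k(S_k)$, then glue $S_1, \ldots, S_n$ into $S$ via \Cref{prop:linear-combination} with uniform weights $\lambda_k = 1/n$; by linearity of accuracy this yields $\acc_\CC(S) < \acc_k(S)$ for every $k$.

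The negation of \Cref{def:non-collaborative} gives, for each $k$, one of two alternatives: (a) there is some $p^* \in (0,1)^n$ with $p_k^* \ne \tfrac{1}{2}$ and $\CC(p^*) \ne \round{p_k^*}$; or (b) $\CC$ is non-constant on the slice $\{p \in (0,1)^n : p_k = \tfrac{1}{2}\}$, yielding $p^*, p^{**}$ on this slice with $\CC(p^*) = 0$ and $\CC(p^{**}) = 1$. Case (a) can fail for at most one $k$: if it failed for two distinct indices, $\CC$ would equal both $\round{p_k}$ and $\round{p_{k'}}$ on the interior, which is impossible at any $p$ with $\round{p_k} \ne \round{p_{k'}}$. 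So the case (b) construction is needed for at most one index.

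For case (a), I would use $\XX = \{a\} \cup \{b_i : i \ne k\}$ with small anchor mass $\mu$ on $a$: set the true rate $q_a = p_k^*$ and give agent $k$ the discrete partition (so $P_k(a) = p_k^*$); for each $i \ne k$, use the two-cell partition merging $a$ with $b_i$ and leaving the remaining $b_j$'s as singletons. The calibration equation $\Pr[Y=1 \mid X \in \{a, b_i\}] = p_i^*$ determines $q_{b_i} \in [0,1]$ for $\mu$ small. By hypothesis, $\CC$ misclassifies $a$ while agent $k$ classifies it correctly, producing the desired strict advantage for agent $k$. For case (b) at the exceptional index, I would use a double-anchor setting $\XX = \{a, a'\} \cup \{b_i, b_i' : i \ne k\}$ with equal mass $\mu$ on $a, a'$, true rates $1-\delta$ and $\delta$, agent $k$'s partition $\{\{a, a'\}, \text{singletons}\}$ (so $P_k \equiv \tfrac{1}{2}$ on the anchors), and agent $i$'s partition pairing $\{a, b_i\}$ with $\{a', b_i'\}$ tuned so that $P_i(a) = p_i^*$ and $P_i(a') = p_i^{**}$. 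Then $\CC$ misclassifies both anchors (scoring $\delta$ each) while agent $k$ trivially scores $\tfrac{1}{2}$ on them; and the hypothesis that (a) fails for this $k$ --- i.e.\ $\CC(p) = \round{p_k}$ on the interior off the slice --- pins $\CC$'s classification at each $b_i, b_i'$ to agree with agent $k$'s, so no auxiliary discrepancy arises in the agent $k$ vs.\ $\CC$ comparison.

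The main obstacle is verifying $\acc_\CC(S_k) \le \acc_i(S_k)$ for $i \ne k$. The delicate subcase, arising in both constructions, is when some $p_i^* = \tfrac{1}{2}$: the calibration-forced $q_{b_i}$ then lands slightly below $\tfrac{1}{2}$, so $\CC$ classifies $b_i$ correctly (as $0$) while agent $i$, bound by the tiebreak convention $\round{\tfrac{1}{2}} = 1$, misclassifies it. A direct computation shows that agent $i$'s anchor-level advantage over $\CC$ --- positive and proportional to the same quantity ($2p_k^* - 1$ in case (a), $1-2\delta$ in case (b)) that controls the auxiliary deficit --- exactly cancels the deficit via the calibration identity defining $q_{b_i}$. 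Executing this cancellation cleanly is the algebraic crux of the argument.
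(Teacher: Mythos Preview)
Your plan for the hard (``only if'') direction is correct and follows exactly the paper's architecture: build a per-agent adversarial setting $S_k$ with $\acc_\CC(S_k)\le\acc_i(S_k)$ for all $i$ and strict inequality at $i=k$, then glue via \Cref{prop:linear-combination}. The constructions differ in details. In case~(a), the paper (\Cref{lem:counterexample}) takes a \emph{deterministic} anchor label $\Pr[Y{=}1\mid X{=}0]\in\{0,1\}$ and treats agent $k$ uniformly with the others (every agent $i$, including $k$, merges $\{0,i\}$); you instead set $q_a=p_k^*$ and give agent $k$ the full discrete partition. Both face the same $p_i^*=\tfrac12$ edge case: the paper resolves it via the mass equality $\Pr[X{=}0]=\Pr[X{=}i]$, you via the calibration-identity cancellation you describe --- these are the same computation in different clothing. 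For condition~(ii), the paper runs a separate two-stage argument (\Cref{prop:part2} builds two auxiliary settings and mixes them with $\lambda\to 1$), whereas your unified treatment --- observing that case~(a) can fail for at most one index and using the resulting constraint $\CC=\round{p_k}$ off the slice to pin $\CC$ at every $b_i,b_i'$ --- is more economical and avoids the limiting parameter.

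One genuine gap: your claim that the ``if'' direction follows by inspection is not correct. Non-collaborativity constrains $\CC$ only on the open cube $(0,1)^n$; on the boundary $\CC$ is unconstrained, and a non-collaborative $\CC$ that (say) outputs $0$ whenever $p_1=1$ and $1$ whenever $p_1=0$ is not reliable --- take $n=2$, $P_1$ the Bayes-perfect predictor on a two-point space, $P_2\equiv\tfrac12$, and compute $\acc_\CC=0<\tfrac12=\min_i\acc_i$. The paper's proof does not address this direction either (its original Theorem~1 asserts only that reliable implies non-collaborative, and \Cref{prop:part1,prop:part2} prove only that implication), so the biconditional in the restated statement appears to be an overreach in the paper as well.
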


The high level plan for the proof is to first show that there must exist $k$ such that condition (i) holds. Then, given that there must exist $k$ such that (i) holds, we show that it must be the case that (ii) also holds (for the same $k$). The heart of the proof is in showing the first step, formalized in the proposition below.

\begin{proposition}\label{prop:part1}
    For a collaboration strategy $\CC$, if $\acc_\CC(S)\ge \min_{i\in [n]}\acc_i(S)$ for all collaboration settings $S$, then there must exist $k\in [n]$ such that for all $(p_1,p_2,\cdots,p_n)\in (0,1)^n$ where $p_k\neq \frac{1}{2},$ $\CC(p_1,p_2,\cdots,p_n)=\round{p_k}.$ 
\end{proposition}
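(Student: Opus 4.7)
I would prove the contrapositive: if for every $k \in [n]$ there is a ``bad point'' $q^{(k)} = (q^{(k)}_1, \ldots, q^{(k)}_n) \in (0,1)^n$ with $q^{(k)}_k \neq \tfrac{1}{2}$ and $\CC(q^{(k)}) \neq \round{q^{(k)}_k}$, then $\CC$ is not reliable. The high-level plan is to construct, for each $k$, a collaboration setting $S_k$ with $\acc_\CC(S_k) \leq \acc_i(S_k)$ for every $i \in [n]$ and $\acc_\CC(S_k) < \acc_k(S_k)$, and then to invoke \Cref{prop:linear-combination} with uniform weights $\lambda_k = 1/n$. In the combined setting $S$, linearity gives $\acc_i(S) - \acc_\CC(S) \geq (1/n)(\acc_i(S_i) - \acc_\CC(S_i)) > 0$ for every $i$, so $\acc_\CC(S) < \min_i \acc_i(S)$, contradicting reliability.

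To build $S_k$, assume WLOG (by label-flipping symmetry) $q^{(k)}_k > \tfrac{1}{2}$ and $\CC(q^{(k)}) = 0$. Fix a small $\delta > 0$ and, for each $j \neq k$, a large parameter $\eta_j > 0$. I define $S_k$ as a disjoint union of $n$ ``prediction profiles'' (each realizable via the partition construction from \Cref{sec:proof}): a \emph{main} profile with prediction vector $q^{(k)}$, mass $\delta$, and label rate $q^{(k)}_k$; and, for each $j \neq k$, a \emph{compensation} profile $V_j$ with mass $\eta_j$, label rate $r_j := q^{(k)}_j + \delta(q^{(k)}_j - q^{(k)}_k)/\eta_j$, and prediction vector equal to $r_j$ in every coordinate except coordinate $j$, which is $q^{(k)}_j$. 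Small $\delta$ and large $\eta_j$ keep $r_j \in (0,1)$, and generic choices of $\eta_j$ make all distinct predictions of each agent lie on single profiles whose rates they match; the only nontrivial calibration check is that agent $i \neq k$ predicts $q^{(k)}_i$ precisely on the main profile and on $V_i$, with combined label rate $(\delta q^{(k)}_k + \eta_i r_i)/(\delta + \eta_i) = q^{(k)}_i$ by the choice of $r_i$.

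For the accuracy comparison on $S_k$: on the main profile, agent $k$ outputs $1$ (correct) while $\CC$ outputs $0$ (wrong), giving agent $k$ a strict surplus of $\delta(2 q^{(k)}_k - 1) > 0$, and every other agent is at least as good as $\CC$ on the main profile. On each $V_j$, all agents $i \neq j$ have prediction $r_j$ and hence output $\round{r_j}$, the optimal label, so $\CC$ can at best match but never exceed them; for $\eta_j$ large, agent $j$'s output $\round{q^{(k)}_j}$ also equals $\round{r_j}$ whenever $q^{(k)}_j \neq \tfrac{1}{2}$, so agent $j$ is likewise optimal on $V_j$. The single edge case $q^{(k)}_j = \tfrac{1}{2}$ is tight: agent $j$ is wrong on $V_j$, but a direct computation yields $\eta_j(2r_j - 1) = -\delta(2q^{(k)}_k - 1)$, which exactly cancels agent $j$'s strict surplus on the main profile, preserving $\acc_j(S_k) \geq \acc_\CC(S_k)$. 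Summing the per-profile contributions gives $\acc_\CC(S_k) \leq \acc_i(S_k)$ for all $i$ and strict inequality for $i = k$.

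The main obstacle I anticipate is the design of the compensation profiles: the bad-point hypothesis constrains $\CC$ only at $q^{(k)}$, so the $V_j$ must be inserted purely to enforce the calibration of agents $i \neq k$ without handing $\CC$ an advantage on those profiles. Making each $V_j$'s prediction vector constant $r_j$ except in coordinate $j$ ensures that, by calibration alone, every agent $i \neq j$ automatically outputs the optimal label on $V_j$, so $\CC$ can only tie or lose there; the exact cancellation in the $q^{(k)}_j = \tfrac{1}{2}$ boundary case is what keeps the inequality in our favor even in the one situation where agent $j$ itself is forced to be wrong on $V_j$.
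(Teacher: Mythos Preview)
Your overall strategy matches the paper's exactly: argue by contrapositive, build for each $k$ a setting $S_k$ with $\acc_\CC(S_k)\le\acc_i(S_k)$ for all $i$ and strict inequality at $i=k$, then combine via \Cref{prop:linear-combination} with uniform weights. The difference is entirely in how $S_k$ is constructed. The paper (\Cref{lem:counterexample}) takes $\XX=\{0,1,\ldots,n\}$ with agent $i$'s partition grouping $\{0,i\}$; the compensation points $j\in[n]$ carry \emph{deterministic} labels $\Pr[Y{=}1\mid X{=}j]\in\{0,1\}$, so on those points every agent $i\neq j$ predicts exactly $0$ or $1$ and is trivially optimal, and the needed mass comparison $\Pr[X{=}0]\ge\Pr[X{=}i]$ drops out directly from the calibration constraint. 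Your construction instead uses $n$ profiles with non-degenerate label rates $r_j\in(0,1)$ tuned to calibrate agent $j$ across the main profile and $V_j$, which forces the extra apparatus of generic $\eta_j$ (to keep the $r_j$ distinct from one another and from the $q^{(k)}_i$), large $\eta_j$ (to force $\round{r_j}=\round{q^{(k)}_j}$ when $q^{(k)}_j\neq\tfrac12$), and a separate cancellation computation for $q^{(k)}_j=\tfrac12$. Both work; the paper's deterministic compensation labels simply remove the need for any limiting or genericity argument and give a cleaner case split.

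One minor caution on your WLOG: because the convention $\round{\tfrac12}=1$ is not symmetric under label flipping, the $q^{(k)}_j=\tfrac12$ edge case behaves differently in the two orientations---in the case you wrote out, agent $j$ is correct on the main profile and wrong on $V_j$ with exact cancellation; in the mirrored case $q^{(k)}_k<\tfrac12$, $\CC(q^{(k)})=1$, agent $j$ is wrong on the main profile (tied with $\CC$) but now $r_j>\tfrac12$ so agent $j$ is \emph{correct} on $V_j$. The conclusion $\acc_j(S_k)\ge\acc_\CC(S_k)$ survives in both, but the mechanism is not literally the label-flipped version of your computation, so the WLOG deserves a sentence of care.
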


To show \Cref{prop:part1}, suppose for sake of contradiction that there does not exist $k\in [n]$ satisfying this property. This means that for every $k\in [n]$, there must exist some tuple $(p_1,p_2,\cdots,p_n)\in (0,1)^n$ where $p_k\neq \frac{1}{2},$ such that $\CC(p_1,p_2,\cdots,p_n)\neq \round{p_k}.$ We show in  \Cref{lem:counterexample} below that the existence of such a tuple implies that there is a collaboration setting $S_k$ such that $\acc_k(S_k)>\acc_\CC(S_k)$ and $\acc_i(S_k)\ge \acc_\CC(S_k)$ for all $i\in [n]\setminus \{k\}.$ Since we can construct such an $S_k$ for all $k\in [n]$, \Cref{prop:linear-combination} implies the existence of a collaboration setting $S$ such that
\begin{equation}
     \acc_\CC(S) = \sum_{k=1}^n \frac{1}{n} \acc_\CC(S_k) < \sum_{k=1}^n \frac{1}{n} \acc_i(S_k) = \acc_i(S)
\end{equation}
for all $i\in [n],$ providing the desired contradiction. Therefore, it suffices to show \Cref{lem:counterexample}.

\begin{lemma}\label{lem:counterexample}
    Consider a collaboration strategy $\CC$. Suppose that there exists a tuple $(p_1,p_2,\cdots,p_n)\in (0,1)^n$ where $p_k\neq \frac{1}{2}$ and $\CC(p_1,p_2,\cdots,p_n)\neq \round{p_k}.$ Then there exists a collaboration setting $S_k$ such that
    \begin{itemize}
        \item[(i)] $\acc_k(S_k)>\acc_\CC(S_k)$,
        \item[(ii)]
        $\acc_i(S_k)\ge \acc_\CC(S_k)$ for all $i\neq k$.
    \end{itemize}
\end{lemma}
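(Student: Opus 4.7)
Without loss of generality assume $p_k>1/2$, so $\round{p_k}=1$ and by hypothesis $\CC(p_1,p_2,\ldots,p_n)=0$ (the case $p_k<1/2$ is symmetric). I will build $S_k$ using the partition framework with a single ``conflict'' atom $x^*$ together with one ``balancing'' atom $y_j$ for each $j\in[n]\setminus\{k\}$. Put $\Pr[X=x^*]=w_0$ and $\Pr[X=y_j]=w_j$, and fix the conditional labels by $\Pr[Y=1\mid X=x^*]=p_k$ and $\Pr[Y=1\mid X=y_j]=r_j$, where $r_j$ is forced by the equation $(w_0p_k+w_jr_j)/(w_0+w_j)=p_j$, i.e.\ $r_j=p_j+(p_j-p_k)\,w_0/w_j$. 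Take $\AA_k$ to be the partition of $\XX$ into singletons, and for $i\neq k$ take $\AA_i=\{\{x^*,y_i\}\}\cup\{\{y_\ell\}:\ell\neq i,k\}$. Then $P_i(x^*)=p_i$ for every $i$, so the joint prediction at $x^*$ is exactly $(p_1,\ldots,p_n)$, while on $y_j$ one has $P_j(y_j)=p_j$ and $P_i(y_j)=r_j$ for $i\neq j$. Calibration of each $P_i$ is immediate from the construction. Choosing each $w_j$ sufficiently large relative to $w_0$ guarantees $r_j\in[0,1]$ and, whenever $p_j\neq 1/2$, that $\round{r_j}=\round{p_j}$.

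Next I compute the contribution of each atom to $\acc_i(S_k)-\acc_\CC(S_k)$. At $x^*$, the joint prediction forces $\CC$ to output $0$ while the true rate is $p_k>1/2$, so agent $k$ is correct with probability $p_k$ and $\CC$ with probability $1-p_k$, contributing $w_0(2p_k-1)>0$ to $\acc_k-\acc_\CC$; the same contribution appears for any $i\neq k$ with $\round{p_i}=1$, while an $i$ with $\round{p_i}=0$ matches $\CC$ on $x^*$ and contributes $0$. At $y_j$, the joint prediction is $(r_j,\ldots,p_j,\ldots,r_j)$ with $p_j$ in position $j$, and every agent $i\neq j$ (including $k$) predicts $\round{r_j}$, which is the best possible binary classification on a point with true rate $r_j$; so the contribution to $\acc_i-\acc_\CC$ at $y_j$ is nonnegative for all $i\neq j$, regardless of what $\CC$ does at this point. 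Summing, agent $k$ collects a strictly positive contribution from $x^*$ and nonnegative contributions elsewhere, giving $\acc_k(S_k)>\acc_\CC(S_k)$ as required.

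It remains to verify $\acc_i(S_k)\geq\acc_\CC(S_k)$ for each $i\neq k$. The only atom on which agent $i$ can underperform $\CC$ is its own balancing atom $y_i$, and only when $\round{p_i}\neq\round{r_i}$; by the weight choice above this happens solely in the boundary case $p_i=1/2$, where $r_i<1/2$ but $\round{p_i}=1$ by convention. In the worst case $\CC$ predicts $0$ on $y_i$, giving a contribution of $w_i(2r_i-1)$; substituting $r_i=1/2-(p_k-1/2)w_0/w_i$ yields exactly $-(2p_k-1)w_0$, which precisely cancels the positive contribution $w_0(2p_k-1)$ that agent $i$ earned on $x^*$ (since $\round{1/2}=1=\round{p_k}$, agent $i$ predicts $1$ at $x^*$). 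Combined with the nonnegative contributions from the other $y_\ell$ ($\ell\neq i,k$), this yields $\acc_i(S_k)\geq\acc_\CC(S_k)$. I expect this exact cancellation at the boundary $p_i=1/2$ to be the delicate step; the rest of the argument is a routine calibration-and-accounting calculation on a finite setting chosen so that the conflict tuple appears at one atom and the remaining atoms have a nearly-deterministic truth that pins down every agent's (and hence $\CC$'s) best-possible accuracy.
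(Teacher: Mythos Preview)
Your argument is correct. The construction and the accuracy accounting both check out, including the boundary case $p_i=\tfrac12$: there the deficit $w_i(2r_i-1)=-(2p_k-1)w_0$ at $y_i$ indeed exactly offsets the surplus $(2p_k-1)w_0$ you collect for agent $i$ at $x^*$ (since $\round{1/2}=1=\round{p_k}$). One cosmetic point: your weights $w_0,w_j$ need normalizing to be a probability distribution, but this is harmless.

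Your route differs from the paper's in an interesting way. The paper places \emph{deterministic} labels on the atoms ($\Pr[Y=1\mid X=0]=1-\CC(p_1,\ldots,p_n)$ and $\Pr[Y=1\mid X=j]=\CC(p_1,\ldots,p_n)$) and gives \emph{every} agent, including $k$, a balancing atom paired with the conflict atom; it then computes the weights in closed form and compares $\Pr[X=0]$ against $\Pr[X=i]$ to extract the strict inequality for $k$ and the weak one for $i\neq k$. You instead put the true rate $p_k$ directly on the conflict atom and give agent $k$ the full singleton partition, which makes agent $k$ Bayes-optimal everywhere and renders (i) immediate; you then only need balancing atoms for $i\neq k$, and you argue existence of suitable weights by a limit rather than an explicit formula. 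Your version makes the $\acc_k>\acc_\CC$ conclusion cleaner, at the price of having to track the exact $p_i=\tfrac12$ cancellation by hand; the paper's deterministic-label construction makes that boundary case fall out of a single observation that $\Pr[X=0]=\Pr[X=i]$ when $p_i=\tfrac12$. Both reach the same destination via the same high-level idea (one conflict atom plus per-agent balancing within the partition framework), with different bookkeeping trade-offs.
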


\begin{proof}

We first define the collaboration setting $S_k(\DD, \AA_1, \cdots, \AA_n).$ Define the input space $\XX = \{0, 1, \cdots, n\},$ and for all $i\in [n]$, let $\AA_i$ be the partition comprising the set $\{0, i\}$ together with the singleton sets $\{j\}$ for $j\notin\{0,i\}.$ ($\XX$ and $\AA_i$ are depicted in \Cref{fig:partition}.)  Define a distribution $\DD$ over $\XX\times \{0,1\}$ in the following manner.
\begin{figure}
    \begin{center}
        \includegraphics[width=10cm]{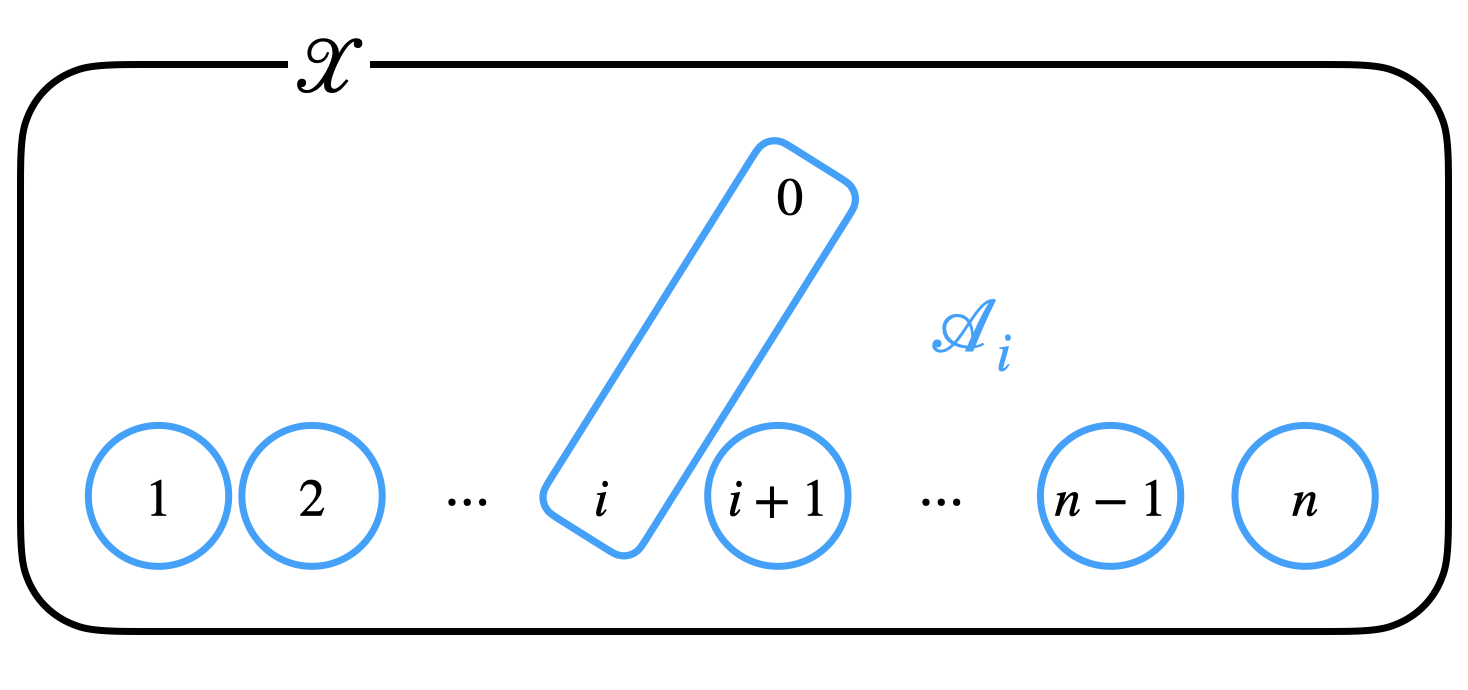}
        \caption{An illustration of a collaboration setting constructed in the proof of \Cref{prop:part1}: the input space $\XX = \{0, 1, 2, \cdots, n\}$ and the partition $\AA_i$ (comprised of $\{0, i\}$ and the remaining singleton sets $\{j\}$ for $i\notin \{0, i\}$). In this setting, agent $i$ is always correct for the inputs $x\notin \{0,i\}$, since $P_i(x)$ is exactly $\Pr[Y=1|X=x]$ on these points. The full collaboration setting used in \Cref{prop:part1} consists of combining $n$ such settings---one for each agent $k\in [n]$. Each such setting $S_k$ is constructed such that the collaboration strategy $\CC$ performs strictly worse than agent $k$ and no better than the remaining agents.}
        \label{fig:partition}
    \end{center}
\end{figure}
Set
\begin{equation}\label{eq:scallop}
    \Pr_{(X,Y)\sim \DD}[Y=1\,|\,X=x] = \begin{cases}
        1 - \CC(p_1,\cdots,p_n) &\,\, \text{for }x=0\\
        \CC(p_1,p_2,\cdots,p_n)&\,\, \text{for }x\in [n].
    \end{cases}
\end{equation} 
Furthermore, when $\CC(p_1,\cdots,p_n)=0,$ set
\begin{equation}
    \Pr_{(X,Y)\sim \DD}[X=x] =
    \begin{cases}
         \frac{1}{1 + \sum_{j\in [n]} (1-p_j)/p_j} & \text{for }x=0\\
        \frac{(1-p_i)/p_i}{1 + \sum_{j\in [n]} (1-p_j)/p_j} & \text{for }x\in [n].
    \end{cases}
\end{equation}
When $\CC(p_1,\cdots,p_n)=1,$ set
\begin{equation}
    \Pr_{(X,Y)\sim \DD}[X=x] = \begin{cases}
         \frac{1}{1 + \sum_{j\in [n]} p_j/(1-p_j)} & \text{for }x=0\\
        \frac{p_i/(1-p_i)}{1 + \sum_{j\in [n]} p_j/(1-p_j)} & \text{for }x\in [n].
    \end{cases}
\end{equation}
This completes the construction of the collaboration setting $S_k.$ $S_k$ satisfies the key property that for all $i\in [n],$ $\Pr_{(X,Y)\sim \DD}[Y=1\,|\,X\in \{0, i\}] = p_i.$ This implies that $P_i(0) = P_i(i) = p_i$, which further implies
a central feature of this construction: that
\begin{equation}
    \Yhat_\CC(0) = \CC(P_1(0), \cdots, P_n(0)) = \CC(p_1,\cdots,p_n).
\end{equation}
Meanwhile, by construction \eqref{eq:scallop},
    $\Pr[Y=1\,|\,X=0] = 1 - \CC(p_1,\cdots,p_n).$
Therefore, $\CC$ is incorrect on $0$ (using the terminology established in Definition \ref{def:correct}). On the other hand, for all $i\in [n]$, agent $i$ is correct on all $x\in [n]\setminus \{i\}$ since agent $i$ is correct on all singletons in $\AA_i$. The correctness of agents $i$ and the collaboration strategy $\CC$ can be summarized as follows:
\begin{center}
    \begin{tabular}{cccc}
        \toprule
        & $x=0$ & $x=i$ & $x\in [n]\setminus i$ \\
        \midrule
        agent $i$ & ? & ? & correct\\
        $\CC$ & incorrect & ? & ?\\
        \bottomrule
    \end{tabular}
\end{center}
Now note that $\Yhat_i(0)=\Yhat_i(i)$ and $\Pr[Y=1\,|\,X=0] = 1 - \Pr[Y=1\,|\,X=i],$ so $i$ must be correct on either $0$ or $i$. To complete the proof, we would like to show that $\acc_\CC(S_k)< \acc_k(S_k)$ and $\acc_\CC(S_k)\le \acc_i(S_k)$ for $i\neq k.$ To show these inequalities, since agent $i$ is always correct on $j\notin \{0,i\}$, it suffices to analyze the accuracy of the classifiers on $x=0$ and $x=i$. This can be handled in two cases:
\begin{itemize}
\item When $\CC$ agrees with agent $i$ on $x=0$, agent $i$ is incorrect on $x=0$ and correct on $x=1$. In this case, agent $i$ is correct whenever $\CC$ is correct, so $\acc_\CC(S_k)\le \acc_i(S_k).$
\item When $\CC$ disagrees with agent $i$ on $x=0$, agent $i$ is correct on $x=0$ and incorrect on $x=1$; $\CC$ is incorrect on $x=0$, and perhaps correct on $x=1$. Therefore, agent $i$ is at least as accurate than $\CC$ on $\{0,i\}$ if $\Pr[X=0]\ge \Pr[X=i]$, and is strictly more accurate than $\CC$ if $\Pr[X=0]> \Pr[X=i]$.
\\
\\
If $p_i=\frac{1}{2},$ then $\Pr[X=0]=\Pr[X=i].$ Therefore, agent $i$ is at least as accurate as $\CC$ on $\{0,i\}$, so $\acc_\CC(S_k)\le \acc_i(S_k).$ 
\\
\\
If $p_i\neq \frac{1}{2}$, since $i$ makes the optimal classification with respect to the set $\{0,i\}$ (recalling the partition $\AA_i$), $\Pr[X=0]>\Pr[X=i].$ Therefore, agent $i$ is strictly more accurate than $\CC$ on $\{0,i\}$, so $\acc_\CC(S_k) < \acc_i(S_k).$ 
\end{itemize}
In every case, $\acc_\CC(S_k)\le\acc_i(S_k).$ Furthermore, by assumption, $k$ disagrees with $\CC$ on $x=0$ and $p_k \neq \frac{1}{2}$, so $\acc_\CC(S_k)<\acc_k(S_k).$
\end{proof}

We now show the second component of the proof, handling the case where $p_k=\frac{1}{2}.$
\begin{proposition}\label{prop:part2}
    Consider a collaboration strategy $\CC$ such that there exists $k\in [n]$ such that for all $(p_1,p_2,\cdots,p_n)\in (0,1)^n$ where $p_k\neq \frac{1}{2},$ $\CC(p_1,p_2,\cdots,p_n)=\round{p_k}.$ Then, if $\acc_\CC(S)\ge \min_{i\in [n]}\acc_i(S)$ for all collaboration settings $S$, there must exist $\alpha\in \{0,1\}$ such that for all $(p_1,p_2,\cdots,p_n)\in (0,1)^n$ where $p_k=\frac{1}{2},$
    \begin{equation}\label{eq:horse}
        \CC(p_1,p_2,\cdots,p_n)=\alpha.
    \end{equation}
\end{proposition}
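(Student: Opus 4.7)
Assume $n \geq 2$ (for $n = 1$, $\CC(1/2)$ is automatically a single constant, making the claim trivial). Arguing by contradiction, suppose $\CC$ is not constant on $\{(p_1,\ldots,p_n) \in (0,1)^n : p_k = 1/2\}$, so there exist $p, q \in (0,1)^n$ with $p_k = q_k = 1/2$, $\CC(p) = 0$, and $\CC(q) = 1$. The plan is to combine, via \Cref{prop:linear-combination}, a ``main'' collaboration setting $T$ that exploits $p$ and $q$ with, for each $i \neq k$, a setting $S_i$ produced by \Cref{lem:counterexample}; a strictly positive convex combination will then yield a single setting in which $\CC$ is strictly less accurate than every agent, contradicting reliability.

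To construct $T$, I will use a doubled version of the \Cref{lem:counterexample} construction. Introduce two central inputs $X_1, X_2$ with $\Pr[Y=1 \mid X = X_1] = 1-\epsilon$, $\Pr[Y=1 \mid X = X_2] = \epsilon$ (for small $\epsilon > 0$), and equal weight, and let agent $k$'s partition group $\{X_1, X_2\}$, so that $P_k(X_1) = P_k(X_2) = 1/2$. For each $i \neq k$, introduce auxiliaries $u_{i,1}, u_{i,2}$ with $\Pr[Y=1 \mid X = u_{i,1}] = \epsilon$ and $\Pr[Y=1 \mid X = u_{i,2}] = 1-\epsilon$; let agent $i$'s partition pair $u_{i,1}$ with $X_1$ and $u_{i,2}$ with $X_2$ (other agents treat these as singletons), and choose the auxiliary weights exactly as in \Cref{lem:counterexample} so that $P_i(X_1) = p_i$ and $P_i(X_2) = q_i$. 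The prediction tuple at $X_1$ is then $p$ and the tuple at $X_2$ is $q$, so $\CC$ outputs $0$ at $X_1$ (true label $1$) and $1$ at $X_2$ (true label $0$); on every auxiliary the $k$-th coordinate is $\epsilon$ or $1-\epsilon$, so condition (i) pins $\CC$ to the correct label. Hence $\CC$ errs only on $\{X_1, X_2\}$, while agent $k$ (always outputting $\round{1/2} = 1$ on the central block) errs only on $X_2$, giving $\acc_\CC(T) < \acc_k(T)$. A short case analysis on $\round{p_i}$ and $\round{q_i}$ then shows every agent $i \neq k$ has error at most $\CC$'s, yielding $\acc_\CC(T) \leq \acc_i(T)$.

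For each $i \neq k$, I next observe that $\CC$ cannot also satisfy condition (i) for $i$: if it did, picking $r \in (0,1)^n$ with $r_k < 1/2 < r_i$ would force $0 = \round{r_k} = \CC(r) = \round{r_i} = 1$, absurd. So there is an $r \in (0,1)^n$ with $r_i \neq 1/2$ and $\CC(r) \neq \round{r_i}$, and \Cref{lem:counterexample} with $i$ in place of $k$ yields a setting $S_i$ with $\acc_\CC(S_i) < \acc_i(S_i)$ and $\acc_\CC(S_i) \leq \acc_j(S_i)$ for $j \neq i$. Combining $T$ with $\{S_i\}_{i \neq k}$ via \Cref{prop:linear-combination} using strictly positive weights, each gap $\acc_j - \acc_\CC$ in the combined setting becomes a convex combination of nonnegative per-setting gaps whose ``targeting'' term ($T$ for $j = k$, $S_j$ otherwise) is strictly positive, so $\acc_\CC < \acc_j$ for every $j$, contradicting reliability. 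The main obstacle is a ``bad case'' in $T$---when $p_i < 1/2 \leq q_i$---in which agent $i$'s classifications coincide with $\CC$'s on all of $\{X_1, u_{i,1}, X_2, u_{i,2}\}$ and give only a tie; the supplementary settings $S_i$ are precisely what break these ties.
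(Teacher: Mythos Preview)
Your argument is correct, but it takes a different decomposition from the paper's. The paper builds only two auxiliary settings: a very simple two-point setting $S_1$ in which $\CC$ necessarily coincides with agent $k$'s classifier (so $\acc_\CC(S_1)=\acc_k(S_1)$) while every other agent is strictly more accurate, and a doubled setting $S_2$ (structurally close to your $T$, but with deterministic $0/1$ labels) in which $\CC$ is strictly worse than agent $k$; it then combines $S_1$ and $S_2$ with weight on $S_1$ close to $1$. You instead build a single $\epsilon$-perturbed doubled setting $T$ that already gives $\acc_\CC(T)<\acc_k(T)$ and $\acc_\CC(T)\le\acc_i(T)$ for $i\neq k$, and then, to break the possible ties for $i\neq k$, you observe that condition~(i) cannot hold simultaneously for $k$ and any $i\neq k$, which lets you reinvoke \Cref{lem:counterexample} once per $i\neq k$ and combine everything with positive weights. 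Your $\epsilon$-perturbation is what lets you apply condition~(i) on the auxiliary points (the paper's $S_2$ uses labels $0$ and $1$, so auxiliaries have predictions outside $(0,1)^n$ and it makes no claim about $\CC$ there). The trade-off: the paper's route uses $2$ pieces rather than $n$, and its $S_1$ is almost trivial to write down and analyze; your route recycles \Cref{lem:counterexample} instead of inventing a new gadget, at the cost of a slightly heavier combination step. One small remark: the weights you need in $T$ are not literally ``exactly as in \Cref{lem:counterexample}'' because your central conditionals are $1-\epsilon$ and $\epsilon$ rather than $0$ and $1$, but the obvious adaptation works for $\epsilon$ small enough that every $p_i,q_i\in(\epsilon,1-\epsilon)$.
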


\begin{proof}
We first establish a collaboration setting $S_1 = (\DD, \AA_1, \cdots, \AA_n)$ such that $\acc_\CC(S_1)=\acc_k(S_1)$ and $\acc_\CC(S_1)<\acc_i(S_1)$ for all $i\neq k.$ Set $\XX = \{0, 1\}$ and choose $\DD$ where
\begin{alignat*}{2}
    \Pr[X=0] &= 1/3, \qquad \qquad \Pr[Y=1\,|\,X=0] &&= \epsilon, \\
    \Pr[X=1] &= 2/3, \qquad \qquad \Pr[Y=1\,|\,X=1] &&= 1 - \epsilon,
\end{alignat*}
where $\epsilon < \frac{1}{2}$.
Now set $\AA_k = \{\{0,1\}\}$ and $\AA_i = \{\{0\}, \{1\}\}$ for all $i\neq k.$ Then $P_k(0)=P_k(1)=\frac{2}{3} - \frac{\epsilon}{3}$, while $P_i(0)=\epsilon$ and $P_i(1)=1-\epsilon$ for $i\neq k.$ Then $\Yhat_\CC(0) = \Yhat_k(0)$ and $\Yhat_\CC(1) = \Yhat_k(1)$ since $(P_1(0), \cdots, P_n(0)), (P_1(1), \cdots, P_n(1))\in (0,1)^n.$
Then observe that $\acc_k(S_1)=\acc_\CC(S_1)=\frac{2}{3}-\frac{\epsilon}{3}$ (since $\CC$ always defers to agent $k$'s classification) and $\acc_i(S_1)=1 - \epsilon$ for $i\neq k$. For $\epsilon < \frac{1}{2}, \frac{2}{3}-\frac{\epsilon}{3} < 1 - \epsilon,$ giving the desired conclusion.

More substantively, we now establish a collaboration setting $S_2$ such that $\acc_\CC(S_2)<\acc_k(S_2).$ If there does not exist $\alpha$ such that $\CC(p_1,p_2,\cdots,p_n)=\alpha$ for all $(p_1,p_2,\cdots,p_n)\in (0,1)^n$ where $p_k=\frac{1}{2}$, then there must exist two tuples $(p_1,\cdots,p_n), (q_1,\cdots,q_n)\in (0,1)^n$ such that $p_k=q_k=\frac{1}{2}$ and
\begin{align}
    \CC(p_1,\cdots,p_n) &= 1\\
    \CC(q_1,\cdots,q_n) &= 0.
\end{align}
Now consider
\begin{equation}
    \XX = \{(0,i):i\in \{0,\cdots,n\}\} \cup \{(1,i):i\in \{0,\cdots,n\}\}.
\end{equation}
Choose $\DD$ such that
\begin{equation}
    \Pr[X=x] = 
    \begin{cases}
        \frac{1}{2 + \sum_{j\in [n]} p_j/(1-p_j) + (1-q_j)/q_j} &\quad \text{for }x\in \{(0,0), (1,0)\}\\
        \frac{p_i/(1-p_i)}{2 + \sum_{j\in [n]} p_j/(1-p_j) + (1-q_j)/q_j} &\quad \text{for }x = (0, i)\\
        \frac{(1-q_i)/q_i}{2 + \sum_{j\in [n]} p_j/(1-p_j) + (1-q_j)/q_j} &\quad \text{for }x = (1, i)
    \end{cases}
\end{equation}
and
\begin{equation}
    \Pr[Y=1|X=x] = 
    \begin{cases}
        0 &\quad \text{for }x=(0,0)\\
        1 &\quad \text{for }x=(0,i): i\in [n]\\
        1 &\quad \text{for }x=(1,0)\\
        0 &\quad \text{for }x=(1,i): i\in [n]
    \end{cases}.
\end{equation}
Then
\begin{align}
    \Pr[Y=1\,|\,X\in \{(0,0),(0,i)\}]&=p_i\\
    \Pr[Y=1\,|\,X\in \{(1,0),(1,i)\}]&=q_i.
\end{align}
Take $\AA_k$ to be the partition with $\{(0,0),(1,0)\}$ and the remaining singleton sets. Take $\AA_i$ for $i\neq k$ to be any partition with $\{(0,0),(0,i)\}$ and $\{(1,0), (1,i)\}$. Then consider $S_2 = (\DD, \AA_1, \cdots, \AA_n).$
Then
\begin{align}
    P_k((0,0))=\frac{1}{2}, \quad P_i((0,i))=p_i, \quad P_k((1,0))=\frac{1}{2}, \quad P_i((1,i))=q_i,
\end{align}
so
\begin{equation}
    \Yhat_\CC(0,0) = \CC(p_1,\cdots,p_n) = 1 = 1 - \Pr[Y=1\,|\,X=(0,0)],
\end{equation}
so $\CC$ is incorrect on $(0,0)$. Similarly, $\CC$ is also incorrect on $(1,0)$. Meanwhile, $k$ is correct on either $(0,0)$ or $(1,0)$, and is correct on all $(0,i)$ and $(1,i)$ for $i\neq 0.$ Therefore, $\acc_\CC(S_2)<\acc_k(S_2).$

The result follows by applying \Cref{prop:linear-combination} with $S_1$ and $S_2$, taking $\lambda_1$ sufficiently close to $1$. In particular, \Cref{prop:linear-combination} implies that for all $\lambda$, there exists a collaboration setting $S$ such that
\begin{equation}
    \acc_i(S) = \lambda \acc_i(S_1) + (1-\lambda) \acc_i(S_2)
\end{equation}
for all $i\in [n]$, and
\begin{equation}
    \acc_\CC(S) = \lambda \acc_\CC(S_1) + (1-\lambda) \acc_\CC(S_2).
\end{equation}
Then, for all $\lambda < 1$, since $\acc_k(S_1)=\acc_\CC(S_1)$ and $\acc_k(S_2)>\acc_\CC(S_2),$ we have that $\acc_k(S) > \acc_\CC(S).$
Now, since $\acc_i(S_1)>\acc_\CC(S_1)$, regardless of $\acc_i(S_2), \acc_\CC(S_2),$ by taking $\lambda$ sufficiently close to $1$, $\acc_i(S) > \acc_\CC(S).$ This provides the desired contradiction.
\end{proof}

Finally, recall that \Cref{thm:main} follows directly from sequentially applying \Cref{prop:part1,prop:part2}. 
\section{Conclusion}\label{sec:conclusion}
In this paper, we proved a ``No Free Lunch''-style result in human-AI collaboration. In particular, in a classification setting with multiple calibrated agents, we showed that any collaboration strategy that is guaranteed to perform no worse than the \textit{least accurate} agent must essentially always defer to the same agent. The result does, however, imply one successful collaboration strategy: deferring to the same agent except when another agent is fully certain in their prediction. More broadly, \Cref{thm:main} suggests that strong individual information (calibration) is not sufficient to enable collaboration; rather, successful collaboration hinges on learning joint information across agents.

\paragraph{Open Problems.}
The present result suggests a number of problems for future work. In the binary setting, it is not clear from the proof of \Cref{thm:main} whether or not complementarity can be achieved for loss functions beyond 0-1 accuracy. For example, the baseline of reliability can be guaranteed under $\ell_2$ loss by simply predicting the average probability of agents. Moreover, the present result places no restrictions on the distribution over $\XX\times \{0,1\}.$ In the spirit of the No Free Lunch Theorem, it would be interesting to consider what restrictions on this distribution enable collaboration strategies (and what those collaboration strategies are). Finally, \Cref{thm:main} does not obviously extend to multi-class classification problems. Multi-class problems further opens up the possibility of collaboration strategies that succeed in set prediction \citep{straitouri2023improving, de2024towards}.

{\small
\bibliographystyle{alpha}
\bibliography{aaai25}
}

\end{document}